\newtheorem{theorem}{Theorem}
\newtheorem{definition}{Definition}
\newenvironment{proof}{\par\noindent{\bf Proof\ }}{\hfill\BlackBox\\[2mm]}
\DeclareMathOperator*{\argmin}{argmin}
\algnewcommand{\IIf}[1]{\State\algorithmicif\ #1\ \algorithmicthen}
\title{Suspicion-Free Adversarial Attacks on Clustering Algorithms}
\author{Anshuman Chhabra\thanks{Equal contribution and the names are in alphabetical order of second names.}\footnotemark[2], Abhishek Roy\footnotemark[1]\footnotemark[3], Prasant Mohapatra\footnotemark[2]\\
\footnotemark[2] Department of Computer Science, University of California, Davis\\
\footnotemark[3] Department of Electrical and Computer Engineering, University of California, Davis\\
\{chhabra, abroy, pmohapatra\}@ucdavis.edu
}
\begin{document}
 

\maketitle

\begin{abstract}
Clustering algorithms are used in a large number of applications and play an important role in modern machine learning-- yet, adversarial attacks on clustering algorithms seem to be broadly overlooked unlike supervised learning. In this paper, we seek to bridge this gap by proposing a black-box adversarial attack for clustering models for linearly separable clusters. Our attack works by perturbing a single sample close to the decision boundary, which leads to the misclustering of multiple unperturbed samples, named \textit{spill-over adversarial samples}. We theoretically show the existence of such adversarial samples for the K-Means clustering. Our attack is especially strong as (1) we ensure the perturbed sample is not an outlier, hence not detectable, and (2) the exact metric used for clustering is not known to the attacker. We theoretically justify that the attack can indeed be successful without the knowledge of the true metric. We conclude by providing empirical results on a number of datasets, and clustering algorithms. To the best of our knowledge, this is the first work that generates spill-over adversarial samples without the knowledge of the true metric ensuring that the perturbed sample is not an outlier, and theoretically proves the above.
\end{abstract}

\section{Introduction}
While machine learning (ML) and deep learning (DL) algorithms have been extremely successful at a number of learning tasks, \cite{papernot2016transferability} \cite{szegedy2013intriguing} showed that supervised classification algorithms can be easily fooled by adversarially generated samples. An adversarial sample is generally indistinguishable from the original sample by a human observer. However, it is still misclassifed by the classification algorithm. Since then, there has been a lot of research undertaken to make supervised models resilient to adversarial samples, and to expose vulnerabilities in existing ML/DL algorithms \cite{Carlini:2017:AEE:3128572.3140444}.

However, unsupervised learning algorithms, in particular \emph{clustering} algorithms, have seen little to no work that analyzes them from the adversarial attack perspective. This is in stark contrast to the significant role these algorithms play in modern data science, malware detection, and computer security \cite{malware2} \cite{malware1}. In a lot of cases, labels for data are \textit{hard} or even \textit{impossible} to obtain, rendering classification algorithms unfit for a large variety of learning problems. 

Designing adversarial examples for clustering algorithms is challenging because most clustering algorithms are inherently \emph{ad-hoc} unlike their supervised learning counterparts. Even defining what constitutes an adversarial sample is not trivial since the labels are absent. It is even more challenging to design a \emph{black-box} attack where the adversary has no knowledge of the clustering algorithm used. 

In this paper, we seek to answer these questions by first defining an adversarial sample for clustering, and then presenting a powerful \emph{black-box} adversarial attack algorithm against clustering algorithms for linearly separable clusters. Our attack algorithm is especially powerful because it generates an adversarially perturbed sample while ensuring that it is not detected as an outlier, and this leads to \emph{spill-over} \textit{adversarial} points. These are \textit{unperturbed} samples that are misclustered. Intuitively, the adversarial sample changes the decision boundary so that some other points get misclustered. Note it is important that the perturbed sample is not an outlier, otherwise the defender may simply discard it. In this way, our algorithm helps the attacker generate adversarial samples without arousing the suspicion of the defender.

Spill-over adversarial attacks on clustering have a number of motivating cases in the real world. Similar to \cite{crussell2015attacking}, where the authors present an attack algorithm against DBSCAN clustering \cite{sander1998density}, consider the AnDarwin tool \cite{crussell2013andarwin} that clusters Android apps into plagiarized and non-plagiarized clusters. If an adversary perturbs one plagiarized app such that it along with some other unperturbed apps, gets misclustered, that could lead to a loss of confidence in the tool as the defender is unaware of the reason for this decreased performance.

In summary, we make the following contributions:
\begin{itemize}
    \item We give a concise definition for the adversary's objective in the clustering setting (Section 3).
    \item We propose a black-box attack algorithm (Section 3) that perturbs a single sample while ensuring that it is not an outlier, which can then lead to additional samples being misclustered without adding any perturbation at all to those samples. To the best of our knowledge, this is the first work in the clustering domain where additional adversarial samples can be generated without any added noise to the corresponding data points. We name these samples \emph{spill-over adversarial} samples.
    \item We theoretically show (Section 4) existence of \emph{spill-over adversarial} samples for K-Means clustering \cite{Lloyd:2006:LSQ:2263356.2269955}. To the best of our knowledge this is the first theoretical result showing the existence of spill-over adversarial samples.
    \item We show that spill-over can happen even if the attacker does not know the exact metric used for clustering (Section 4). Our attack is stronger as it works with a noisy version of the true metric. In Section 5 we show that the attack is successful for different datasets where the true metric is not known. 
    \item We test our algorithm (Section 5) on Ward's Hierarchical clustering \cite{ward1963hierarchical}, and the K-Means clustering \cite{Lloyd:2006:LSQ:2263356.2269955} on multiple datasets, e.g., the UCI Handwritten Digits dataset \cite{uci_digits}, the MNIST dataset \cite{lecun1998mnist}, the MoCap Hand Postures dataset \cite{gardner2014measuring}, and the UCI Wheat Seeds dataset \cite{charytanowicz2010complete}. We find that our attack algorithm generates multiple spill-over adversarial samples across all datasets and algorithms. 
\end{itemize}

The rest of the paper is structured as follows: Section 2 discusses related work, Section 3 presents the threat model, and the proposed attack algorithm, Section 4 details the theoretical results, Section 5 presents the results, and Section 6 concludes the paper and discusses the scope for future work.

\section{Related Work}
The first works discussing clustering in an adversarial setting were \cite{skillicorn2009adversarial} and \cite{dutrisac2008hiding} where the authors discussed adversarial attacks that could lead to eventual misclustering using \emph{fringe} clusters where adversaries could place adversarial data points very close to the decision boundary of the original data cluster. \cite{biggio2013data} considered the adversarial attack on clusterings in a more detailed manner where they described the \emph{obfuscation} and \emph{poisoning} attack settings, and then provided results on single-linkage hierarchical clustering. In \cite{biggio2014poisoning} the authors extended their previous work to complete-linkage hierarchical clustering. \cite{crussell2015attacking} proposed a poisoning attack specifically for DBSCAN clustering, and also gave a defense strategy. 

As can be seen, very minimal research has discussed adversarial attacks on clustering. Moreover, the existing work has focused mainly on attacks for specific clusterings, instead of generalized black-box attacks as in this paper. While \cite{biggio2014poisoning} and \cite{biggio2013data} define obfuscation and poisoning attacks on clustering, none of these fit the bill in the perspective of our attack algorithm which generates spill-over adversarial samples. So we provide a simple yet effective definition for adversarial attacks on clustering in the next section. In the aforementioned related works the results are not theoretically justified as well. In this paper we prove the existence of spill-over adversarial samples for the K-Means clustering in Section 4. We also show, unlike \cite{biggio2013data}, that even if the adversary does not have the exact knowledge of clustering metric, spill-over adversarial samples can still exist. As discussed above, our attack is stronger as the adversarial sample generated is guaranteed to not be an outlier. This is not addressed in any of the previous works. We also present results on a number of different datasets for both the K-Means and Ward's clustering algorithms in Section 5.

\section{Proposed Attack Framework}

\subsection{Threat Model}

We first define the threat model and the role of the adversary. The main features of the threat model are as follows:  
\begin{enumerate}
    \item The adversary has no knowledge of the clustering algorithm that has been used and is thus, going to carry out a \emph{black-box} attack. 
    \item While the adversary does not have access to the algorithm, we assume that she has access to the datasets, and a noisy version of the true metric used for clustering. This assumption is weak as the metric can be learnt by observing the clustering outputs by the defender. For details see \cite{xing2003distance} and the references therein. We therefore first present our algorithm assuming exact knowledge of metric. We then show in Section 4 that, if the noise is small enough, under certain conditions, a spill-over adversarial sample in clustering using the noisy metric will also spill-over in clustering using the true metric.  
    \item Once the attacker has the clusters, she can use the adversarial attack algorithm provided in the subsequent subsection to perturb just one judiciously chosen input sample, called the \textit{target sample}. The algorithm perturbs this input data sample by iteratively crafting a precise additive noise to generate the spill-over adversarial samples. We allow the perturbation to be different for each feature of the sample. We assume that each perturbation is within a pre-specified threshold which is determined by adversary's motivation of not getting detected as an outlier, and/or the limited attack budget of the adversary.
\end{enumerate}

\subsubsection{The Adversary's Objective}
Before stating the adversary's objective, we need to define misclustering as it is not well-defined unlike supervised learning. The goal of the adversary is to maximize the number of points which are misclustered into the target cluster. Note that, perturbation of the target sample essentially perturbs the decision boundary which creates spill-over adversarial samples but the target sample may not necessarily be misclustered. The adversarial attack algorithm presented in this paper operates on only two clusters at a time. This is similar to a targeted adversarial attack on supervised learning models \cite{Carlini:2017:AEE:3128572.3140444}.

\begin{table}[t]
\caption{Parameters and Notation}
\begin{center}
\begin{tabular}{ |p{1.3cm}|p{6.35cm}| }
\hline
Notation & Meaning\\
\hline
\hline
 $X$ & Dataset used for clustering, $X \in \mathbb{R}^{n \times m}$\\
 \hline
 $n$ & Number of samples in $X$\\
 \hline
 $m$ & Number of features in $X$\\
 \hline
 $C$ & The clustering algorithm used\\
 \hline
 $k_i$ & $i^{th}$ cluster, $i=1,2$\\
 \hline
 $n_i$ & Number of samples in $i^{th}$ cluster, $i=1,2$\\
 \hline
 $X_{k_{i}}$ & Set of data samples of $X$ in $k_i$, $X_{k_{i}} \in \mathbb{R}^{n_i \times m}$, $i=1,2$\\
 \hline
 $Y$ & Clustering result as $n \times 2$ matrix, $Y \in \{0,1\}^{n \times 2}$\\
 \hline
 $x_t$ & Target sample to perturb in $X$\\
 \hline
 $\Delta$ & Acceptable noise threshold for each feature, $\Delta \in \mathbb{R}^m$\\
 \hline
 $\epsilon^*$ & Optimal additive perturbation, $\epsilon^* \in \mathbb{R}^m$\\
 \hline
 $\delta$ & Defined metric to measure cluster change\\
 \hline
 $x_t'$ & Target sample after perturbation\\
 \hline
 $X'$ & $X$ with $x_t$ replaced by $x_t'$\\
 \hline
 $Y'$ & Clustering result after noise addition as $n \times 2$ matrix, $Y' \in \{0,1\}^{n \times 2}$\\
 \hline
  $S$ & Set of spill-over adversarial samples\\
 \hline
 $n_s$ & Number of samples in $S$\\
 \hline
  $c_i$ & Centroid of cluster $k_i$, $i=1,2$\\
 \hline
 $\mathcal{D}_C$ & Mahalanobis Depth for clusters \\
 \hline
 $\mathcal{D}_M$ & Coordinatewise-Min-Mahalanobis Depth for high dimensional clusters \\
 \hline
\end{tabular}
\end{center}
\end{table}
\subsection{Problem Formulation}
We list all the notation and parameters used in the rest of the paper in Table 1. Also, the Frobenius norm of a matrix $M$ is denoted by  $\|A\|_{F} = (\sum_{i,j} A_{ij}^2)^{1/2}$, and $\langle \cdot, \cdot \rangle$ represents the standard vector inner product. Let a clustering algorithm be defined as a function $C:\mathbb{R}^{n \times m}\rightarrow \{0,1\}^{n \times k}$ that partitions the given dataset $X \in \mathbb{R}^{n \times m}$ into $k$ clusters. Since we are only considering hard clustering problems, we can represent the clustering result as a matrix $Y \in \{0,1\}^{n \times k}$ where each row has all 0 except one 1 indicating the cluster that point belongs to, and thus, $Y = C(X)$. Here $n$ refers to the number of samples in the dataset and $m$ refers to the features. The adversary can only perturb a single sample $x_t$, the target sample, which lies in the dataset, that is $X = \{x_j\}_{j=1}^n$. The procedure for the selection of the target point $x_t$ by the adversary is explained in the next section when the proposed attack algorithm is discussed. 

In this paper we consider 2-way clustering, i.e., $k = 2$, with the two clusters denoted as $k_1$ and $k_2$. The number of samples in each cluster are denoted as $n_1$ and $n_2$, respectively. Now, assuming $x_t$ belongs to cluster $k_1$, that is $Y_{x_{t}, k_{1}} = 1$, the aim of the adversary then is to perturb $x_t$ to $x_{t}'$ in such a way that a subset of data points $S \subseteq X_{k_1}$ change their cluster membership from $k_1$ to $k_2$. The new dataset containing $x_{t}'$ instead of $x_t$ is denoted as $X'$. Thus, in the resulting clustering output $Y' = C(X')$, for all $x_i \in S$, the attack leads to $Y_{x_{i},k_{2}}' = 1$ (and $Y_{x_{i},k_{1}}' = 0$) whereas in the original clustering, $Y_{x_{i},k_{1}} = 1$ (and $Y_{x_{i},k_{2}}' = 0$). The set $S$ is what we call the set of \emph{spill-over} adversarial samples. 
\subsection{Proposed Black-box Attack on Clustering}
\begin{algorithm}
\caption{Proposed Black-box Adversarial Attack}\label{alg:attack}
 \textbf{Input:} $X$, $C$, $\Delta$, $Y = C(X)$, $k_1$, $k_2$, $n_1$, $n_2$, $X_{k_{1}}$,$X_{k_{2}}$\\
 \textbf{Output:} Optimal additive perturbation $\epsilon^* \in \mathbb{R}^m$ \\
 \begin{algorithmic}[1]
 \State \textbf{set} $c_2 \leftarrow \frac{1}{n_2} \smashoperator{\sum\limits_{x_j \in X_{k_{2}}}} x_j$
 \State \textbf{set} $x_t \leftarrow \argmin_{x \in X_{k_{1}}} |x - c_2|$
 \Function{$f$}{$\epsilon$}
 \State \textbf{set} $x_{t}' \leftarrow x_t + \epsilon$
 \State \textbf{obtain} $X'$ \textbf{from} $X$ \textbf{by replacing} $x_t$ \textbf{with} $x_t'$  
 \State \textbf{obtain} $Y' = C(X')$
 \State \textbf{set} $\delta \coloneqq - \|YY^T-Y'Y'^T\|_{F}$
 \State \textbf{return} $\delta$
\EndFunction
\State \textbf{minimize} $f(\epsilon^*)$ \textbf{subject to} $\epsilon^{*}_j \in [-\Delta_j, \Delta_j]$ \textbf{where} $j = 1,2,...,m$ 
\State \textbf{return} $\epsilon^*$
 \end{algorithmic}
\end{algorithm}
The proposed algorithm is shown as Algorithm 1. The inputs for the algorithm are the dataset $X \in \mathbb{R}^{n \times m}$, the clustering algorithm $C$, the clustering result on the original data $Y \in \{0,1\}^{n \times 2}$, the data points that populate each of the two clusters, $X_{k_{1}} \in \mathbb{R}^{n_1 \times m}$ and $X_{k_{2}} \in \mathbb{R}^{n_2 \times m}$, and the noise threshold $\Delta \in \mathbb{R}^m$ where $k_i$ ($i=\{1,2\}$) denotes the clusters. $\Delta$ is the noise threshold for each of the $m$ features, i.e., the $j^{th}$ feature of the optimal perturbation will lie in the range $[-\Delta_j, \Delta_j]$ where $j = 1,..,m$. This definition for $\Delta$ can lead to the case where points of $k_2$ spill-over into $k_1$, but since the formulation is equivalent, we consider only the case of spill-over from $k_1$ to $k_2$ in the paper. $\Delta$ ensures that the adversary does not perturb the target sample too much to get detected by the defender as an outlier. $\Delta$ can also be interpreted as the limited attack budget of the  adversary. We elaborate on how to choose $\Delta$ at the end of this section. \\ 
Algorithm 1 proceeds as follows: In Line 1, we find the centroid of whichever cluster we want the spill-over points to be a part of, after the attack. From here on, without loss of generality, we assume the spill-over points belong to cluster $k_1$ originally, and therefore cluster centroid $c_2$ for $k_2$ is calculated. Next, in Line 2, we select the target point $x_t$ in $k_1$ which is closest in Euclidean distance to $c_2$. This point is a good target for the adversarial attack as it is the nearest point of $k_1$ to the decision boundary between both clusters. Lines 3-10 define the function $f(\epsilon) \in \mathbb{R}$ which we optimize over to find the $\epsilon$ that will lead to spill-over.

In Line 4 of the algorithm, we perturb the target point and obtain $x_t'$, and get $X'$ by replacing $x_t$ with $x_t'$ in $X$. We then find the noisy clustering result $Y' = C(X')$. Line 5 presents the metric $\delta$ used to measure how much the clustering result has \textit{changed} from the original clustering to after the attack \cite{biggio2013data}:
\begin{equation}
    \delta \coloneqq - \|YY^T-Y'Y'^T\|_{F} \label{eq:defdel}
\end{equation}

The $ij^{th}$ element of the $YY^T$ matrix represents whether sample $i$, and $j$ belong to the same cluster. Note that, if there is no change in cluster membership, $\delta = 0$. $|\delta|$ increases with the number of points that spill over from $k_1$ to $k_2$.

Line 11 is essentially the formulation of the minimization problem. We have to find the optimal perturbation $\epsilon_j^* \in [-\Delta_j, \Delta_j]$ which minimizes $f$, such that $f(\epsilon^*) \leq f(\epsilon)$ for any $\epsilon \in [-\Delta_j, \Delta_j]$, $j = 1,2,\cdots,m$. It is also important to understand a few aspects about the function $f$, before we get to the choice of an optimization approach. As the function is not continuous, we cannot use gradient based methods to solve the minimization problem. Instead, we require derivative-free black-box optimization approaches to minimize $f$ while ensuring that the noise threshold constraints on the optimal perturbation $\epsilon^*$ are met. There are many possibilities for such an optimization procedure, e.g., genetic algorithms \cite{goldberg2006genetic}, and simulated annealing \cite{kirkpatrick1983optimization}. We opt for a cubic radial basis function (RBF) based surface response methodology \cite{knysh2016blackbox} for the optimization. The optimization approach utilizes a modified iterative version of the CORS algorithm \cite{Regis2005}, and uses the Latin hypercube approach \cite{mckay2000comparison} for the initial uniform sampling of search points required for the CORS algorithm. The CORS optimization procedure has achieved competitive results on a number of different test functions \cite{holmstrom2008adaptive}, \cite{Regis2005}. For our attack algorithm, this optimization algorithm achieved much better results on multiple datasets as compared to methods like genetic algorithm, and simulated annealing. We found that this optimization was much less sensitive to parameter choices. We present the results obtained in Section 5.
\subsubsection{Choosing $\Delta$}
If the adversary does not have an attack budget, then $\Delta$ should only be chosen such that the adversarial sample does not get construed as an outlier. Mahalanobis Depth (MD) is one such measure for \textit{outlyingness}:
\begin{definition}[Mahalanobis Depth]
Mahalanobis Depth of a point $x$, $\mathcal{D}\left(x\right)$, with respect to a set $X\subseteq \mathbb{R}^m$ is defined as \begin{align}\label{eq:defmd}
    \mathcal{D}\left(x\right)=\left(1+\left(x-\bar{x}\right)^\top\hat{\Sigma_x}^{-1}\left(x-\bar{x}\right)\right)^{-1}
\end{align}
where $\bar{x}$, and $\hat{\Sigma_x}$ are the sample mean and covariance. 
\end{definition}
The smaller the value of $\mathcal{D}$, the larger is \textit{outlyingness}. Using $\mathcal{D}$ to detect the \textit{outlyingness} of a point for a dataset with clusters may pose problems, e.g., for two well separated clusters the points around the line joining the cluster means have very small depth. But a point between two clusters which is sufficiently far from both the clusters will clearly be interpreted as an outlier. So we propose a modified measure of depth similar in flavor to \cite{paindaveine2013depth}:
\begin{definition}[Mahalanobis Depth for Clusters (MDC)]
Let there be $J$ clusters. Say $\mathcal{D}\left(x\right)$ with respect to only cluster $i$ is given by $t_i$. Then Mahalanobis Depth for clusters of $x$ is defined as $\mathcal{D}_C\left(x\right)=\sum_{i=1}^{J}t_i$.
\end{definition}
For high dimensional data computing MD is difficult as the computed covariance matrix can be singular. So for high dimensional data we propose a new depth-based measure, Coordinate-wise Min-Mahalanobis-Depth (COMD), to measure outlyingness:

\begin{definition}[Coordinate-wise Min-Mahalanobis-Depth]
Consider $x=\left[x_1,x_2,\cdots,x_m\right]\in X\subseteq \mathbb{R}^{n\times m}$. Let $X_i$ denote the $i^{th}$ column of $X$. Let $\mathcal{D}_{C,i}$ denote the MDC depth of $x_i$ w.r.t the points $X_i\subseteq \mathbb{R}^{n\times 1}$. Then Coordinate-wise Min-Mahalanobis-Depth is defined as $\mathcal{D}_M\left(x\right)=\min\{\mathcal{D}_{C,i}\}_{i=1}^m$.  
\end{definition}

Intuitively COMD measures the maximum \textit{outlyingness} along all the coordinates. It is a conservative measure of outlyingness, and hence ensuring small value of COMD is sufficient for the adversary to avoid being detected as an outlier. After observing the clusters, the attacker forms equi-$\mathcal{D}_C\left(x\right)$ contours, or equi-$\mathcal{D}_M\left(x\right)$ spaces for $m\geq 2$, over the data as we will show in the toy example in Section 4.1. $\Delta$ is chosen such that the perturbed point is at least above 0.1 quantile of the COMD values of the dataset. 

\begin{figure*} 
\centering
\begin{subfigure}{.25\textwidth}
  \centering
  \includegraphics[scale=0.34]{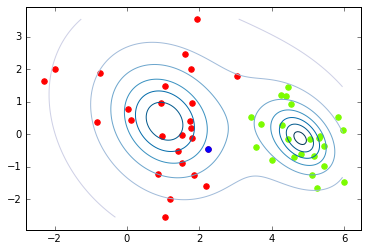}%
  \caption{}
\end{subfigure}%
\begin{subfigure}{.25\textwidth}
  \centering
  \includegraphics[scale=0.34]{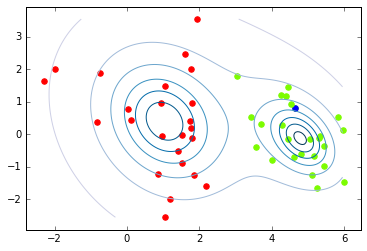}%
  \caption{}
\end{subfigure}%
\begin{subfigure}{.25\textwidth}
  \centering
  \includegraphics[scale=0.34]{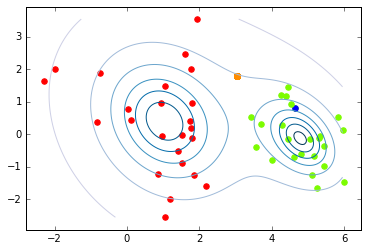}%
  \caption{}
\end{subfigure}%
\begin{subfigure}{.25\textwidth}
  \centering
  \includegraphics[scale=0.34]{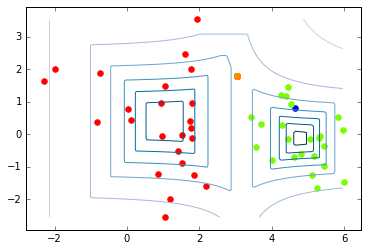}%
  \caption{}
\end{subfigure}
\caption{Results on toy data after using Algorithm 1 for spill-over attack} \label{fig:toy}
\end{figure*}

\section{Theoretical Results}
In this section we theoretically study the effect of perturbation, and using a noisy metric. The following theorem shows that perturbing one sample can distort the decision boundary in such a way that another uncorrupted point can spill-over. 
\begin{theorem}
Say k-means clustering is used to cluster a linearly separable dataset. A judiciously chosen datapoint can be perturbed by additive noise, ensuring that it does not become an outlier, in such a way that there may exist another point which changes cluster membership, i.e., one or more spill-over point(s) may exist.
\end{theorem}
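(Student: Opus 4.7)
The plan is to prove the theorem constructively: exhibit a linearly separable two-cluster configuration together with an explicit perturbation of the target sample $x_t = \argmin_{x \in X_{k_1}} \|x - c_2\|$ chosen by Algorithm~1 that produces a spill-over point $x_s \in X_{k_1}\setminus\{x_t\}$, while keeping $x_t' = x_t + \epsilon$ within the non-outlier region defined by the depth measures of Section~3.

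The setup is the two-cluster k-means geometry: the assignment rule is that $x \in k_2$ iff $\langle x - \tfrac{1}{2}(c_1+c_2),\, c_2-c_1\rangle > 0$, so the decision boundary is the perpendicular bisector of the segment joining $c_1$ and $c_2$. The key observation is that if $\epsilon$ is chosen to push $x_t$ just past this bisector toward $c_2$, then in the subsequent Lloyd update $x_t'$ is reassigned to $k_2$, so that $c_1' = (n_1 c_1 - x_t)/(n_1 - 1) = c_1 + (c_1 - x_t)/(n_1 - 1)$ drifts \emph{away} from $c_2$ and $c_2' = (n_2 c_2 + x_t')/(n_2 + 1) = c_2 + (x_t' - c_2)/(n_2 + 1)$ drifts \emph{toward} $c_1$. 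Both shifts translate the new bisector in the direction from $c_2$ toward $c_1$; a short algebraic computation bounds this translation from below in terms of $\|c_2 - x_t\|$ and $1/n_1,\, 1/n_2$.

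Next I would select, as the spill-over witness, a "second-closest" point $x_s \in X_{k_1}$ whose signed distance $d_s > 0$ to the original bisector is strictly smaller than the translation bound just obtained. After the centroid update the inequality $\|x_s - c_1'\|^2 > \|x_s - c_2'\|^2$ holds, so $x_s$ is reassigned to $k_2$. To upgrade this one-step statement to convergence, I would invoke a monotonicity argument: any further Lloyd iteration that reassigns $x_s$ only pushes $c_1$ further from $c_2$ and $c_2$ further toward $c_1$, strengthening the flipping inequality; combined with the descent property of Lloyd's objective this shows the algorithm settles at a fixed point in which $x_s$ is a genuine spill-over sample. Because the construction only needs $d_s$ small (arrangeable by seeding a single point of $X_{k_1}$ near the separating hyperplane), a concrete low-dimensional example certifies existence.

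Finally I would verify the non-outlier guarantee: since $x_t'$ sits just on the $k_2$ side of the original bisector, it lies inside the convex hull of the full dataset, and its Mahalanobis depth $\mathcal{D}_C$ (or $\mathcal{D}_M$ in higher dimension) remains comfortably above the $0.1$-quantile threshold of Section~3, so $\epsilon$ fits coordinatewise within the budget $\Delta$. The main obstacle I anticipate is controlling the full Lloyd dynamics after perturbation, since reassignment cascades can in principle oscillate. I would neutralize this by restricting to a regime small enough that only $x_t$ and $x_s$ change assignment in the first two steps, and the subsequent centroid shifts of $c_1, c_2$ are monotone in the "push clusters apart" direction, ruling out any reversal and hence guaranteeing a stable spill-over.
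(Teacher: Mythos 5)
Your core geometric idea --- perturb the boundary-nearest point of $k_1$ so that $c_1$ recedes from $c_2$, the k-means decision boundary sweeps toward $c_1$, and a second, unperturbed point with small margin flips --- is the same mechanism the paper uses. But the paper's execution differs in two ways that make its proof much shorter and close the gaps your plan leaves open. First, the paper perturbs the target all the way to $c_2$ rather than ``just past the bisector'': since $(n_2 c_2 + c_2)/(n_2+1) = c_2$, the second centroid is exactly unchanged, so only the single shift $c_1 - c_1' = (x - c_1)/(n_1-1)$ needs to be tracked, and the perturbed point is trivially not an outlier because it sits at a cluster center. Your placement between the clusters is precisely the region the paper itself flags as potentially low-depth, and ``inside the convex hull'' does not by itself imply large Mahalanobis depth, so your non-outlier step needs a real argument. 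Second, and more importantly, the paper argues by contradiction about the fixed-point condition rather than simulating Lloyd's dynamics: assuming the partition of $k_1$ is otherwise unchanged, it computes $\|y - c_1'\|^2 = \|y - c_2\|^2 - \alpha + \|c_1 - c_1'\|^2 + 2\langle y - c_1, c_1 - c_1'\rangle$ for a point $y$ with margin $\alpha$ and $\langle x - c_1, y - c_1\rangle \ge 0$, and concludes that if $\alpha \le \|c_1 - c_1'\|^2 + 2\langle y - c_1, c_1 - c_1'\rangle$ then $y$ is closer to $c_2$, contradicting the assumed stability. This delivers the theorem's weak ``may exist'' conclusion with no need to control convergence.

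The genuine gap in your plan is the monotonicity/convergence step. You assert that subsequent Lloyd iterations ``only push $c_1$ further from $c_2$ and $c_2$ further toward $c_1$,'' but you neither prove this nor need it: the statement is existential and conditional, so exhibiting a violation of the current partition's optimality (as the paper does) already forces a membership change. If you insist on following the dynamics, you must actually bound the reassignment cascade and show it terminates with $x_s$ still in $k_2$; ``restricting to a regime'' where only $x_t$ and $x_s$ move is an assumption, not a proof. Likewise, ``a short algebraic computation bounds this translation from below'' is doing real work and should be written out explicitly --- the paper's version of that computation is the displayed identity above together with the explicit threshold on $\alpha$.
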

\begin{proof}
Let a point $x$ is such that $\langle x-c_1,c_2-c_1\rangle\geq 0$. Intuitively we select $x$ belonging to $k_1$ such that $x-c_1$ forms an acute angle with $c_2-c_1$. It is easy to see that such a point always exists. Now, an adversary perturbs $x$ to $c_2$, hence ensuring that the perturbed point is not an outlier. We will show by contradiction that, under certain conditions, there exists at least one other point which will spill over to $k_2$.\\
Let us assume that $k_1$ remain unchanged after the perturbation except that $x$ moves to $c_2$. Because of this perturbation the mean and composition of the $k_2$ does not change. Let the mean of the $k_1$ be $c_1'$ after the perturbation. We have,\\
$c_1-c_1'=c_1-\frac{n_1c_1-x}{n_1-1}=\frac{x-c_1}{n_1-1}$.
Say there is a point $y$ such that $\langle x-c_1,y-c_1\rangle\geq 0$, and $\|y-c_2\|^2=\|y-c_1\|^2+\alpha$ with $\alpha\geq 0$ . Consequently,  $\langle y-c_1,c_1-c_1'\rangle=\langle y-c_1,\frac{x-c_1}{n_1-1}\rangle\geq 0$. Now we have,
\begin{align*}
    &\|y-c_1'\|^2\\
    =&\|y-c_1+c_1-c_1'\|^2\\
    =& \|y-c_2\|^2+\|c_1-c_1'\|^2+2\langle y-c_1,c_1-c_1'\rangle -\alpha
\end{align*}
The second and third term in the above expression is non-negative. If $\alpha\leq \|c_1-c_1'\|^2+2\langle y-c_1,c_1-c_1'\rangle$, then the point $y$ is closer to $c_2$ and should be in $k_2$. Contradiction!
\end{proof}
\vspace{1mm}
\indent \indent As discussed above, we assume that the adversary has access to a noisy version of the true metric. In the following theorem we show that a spill-over adversarial sample under noisy metric will spill-over under clustering with true metric under certain conditions.  
\begin{theorem}
Let there be a point $y$ which spilled over from $k_1$ to $k_2$ of the dataset $X$ due to a attack following Algorithm~\ref{alg:attack} with distance metric $d'\colon X \times X \rightarrow \mathbb{R}^+$. If the true distance metric $d\colon X \times X \rightarrow \mathbb{R}^+$ used for clustering by the defender satisfies 
\begin{align} \label{eps_perturbation}
    \max\{0,d(u,v) - \zeta\} \leq d'(u,v) \leq d(u,v) + \zeta
\end{align}
$\forall (u,v) \in X$, and  $\zeta \geq 0$, then, under certain conditions, $y$ will spill-over in the clustering using metric $d$ as well. 
\end{theorem}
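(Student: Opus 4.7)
The plan is to reduce the question of spill-over under the true metric $d$ to a strict-inequality preservation problem, and then to read off the ``certain conditions'' as a margin requirement of the form ``distance gap exceeds $2\zeta$.'' First I would restate what it means for $y$ to spill over under $d'$ in the language of centroid distances used in the proof of Theorem~1: let $c_1,c_2$ be the pre-attack centroids and $c_1',c_2'$ the post-attack centroids of the two clusters. Spill-over of $y$ from $k_1$ to $k_2$ under $d'$ is then equivalent to the two inequalities $d'(y,c_1) < d'(y,c_2)$ (so that $y \in k_1$ before the attack) and $d'(y,c_2') < d'(y,c_1')$ (so that $y \in k_2$ after the attack). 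The goal becomes showing that the analogous inequalities hold for $d$ in place of $d'$.

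Next I would invoke the two-sided bound (\ref{eps_perturbation}), which can be rewritten as $|d(u,v)-d'(u,v)|\leq \zeta$ whenever $d'(u,v)\geq \zeta$ (the $\max\{0,\cdot\}$ truncation only binds for tiny distances, which can be handled separately). Applying this to the four pairs $(y,c_1),(y,c_2),(y,c_1'),(y,c_2')$ yields, for instance,
\begin{align*}
 d(y,c_2)-d(y,c_1) \;\geq\; \bigl(d'(y,c_2)-\zeta\bigr)-\bigl(d'(y,c_1)+\zeta\bigr)
 \;=\; \bigl(d'(y,c_2)-d'(y,c_1)\bigr)-2\zeta.
\end{align*}
Hence $d(y,c_1)<d(y,c_2)$ is guaranteed as soon as $d'(y,c_2)-d'(y,c_1)>2\zeta$, and the symmetric computation yields $d(y,c_2')<d(y,c_1')$ provided $d'(y,c_1')-d'(y,c_2')>2\zeta$. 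These two strict-margin inequalities are precisely the ``certain conditions'' referred to in the statement, and together they deliver the conclusion.

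Finally I would close the argument by combining the two inequalities to conclude that the clustering under the true metric $d$ puts $y$ in $k_1$ before the attack and in $k_2$ after, matching the definition of spill-over. The main obstacle is not the distance manipulation itself, which is straightforward, but the fact that the centroids $c_1',c_2'$ are themselves functions of the clustering and hence, in principle, of the metric used. To keep the argument clean, I would restrict attention to the regime of Theorem~1 in which the only point reassigned by the attack is the target $x_t$, so the post-attack centroids of both clusters are determined purely by the perturbed dataset and are independent of whether $d$ or $d'$ is used to evaluate the clustering; the same centroid values then appear on both sides and the $2\zeta$-margin condition transfers the spill-over directly from $d'$ to $d$.
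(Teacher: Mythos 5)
Your proposal is correct (given the assumptions you state), but it takes a genuinely different route from the paper's proof, and the comparison is instructive. The paper keeps track of four sets of centroids --- pre- and post-attack, under $d'$ and under $d$ --- and, starting only from the \emph{pre-attack} inequality $d'(y,c_2')>d'(y,c_1')$, chains two triangle inequalities to absorb all centroid discrepancies into a slack term $\gamma = d'(\bar{c_1}',c_1')+d'(\bar{c_1}',\bar{c_1})+d'(\bar{c_2},c_2')$, then applies the metric bound to obtain $d(y,\bar{c_2})-d(y,\bar{c_1}) > -\gamma-2\zeta$. Since this is only a \emph{lower} bound, the paper cannot force the left-hand side to be negative; it concludes merely that the bound does not preclude spill-over, i.e., that the quantity ``is negative for a range of values of $y$.'' You instead use the \emph{post-attack} spill-over inequality under $d'$ (which the paper never invokes) together with the two-sided perturbation bound, and obtain the explicit sufficient condition that the distance margin exceed $2\zeta$; this genuinely forces $d(y,\bar{c_2})<d(y,\bar{c_1})$ and so delivers an actual implication rather than a feasibility statement. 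The price you pay is the assumption that the centroids are metric-independent (the clusterings under $d$ and $d'$ agree before the attack and, apart from $y$, after it), which you correctly flag and fold into the ``certain conditions''; the paper tolerates centroid mismatch via $\gamma$ but at the cost of a much weaker conclusion. Your margin conditions are concrete and checkable, so your version is arguably the stronger reading of the theorem; if you wanted to combine the two, you could run your margin argument but replace the assumption of coinciding centroids with the paper's triangle-inequality slack, requiring the margin to exceed $2\zeta+\gamma$ instead of $2\zeta$. One small note: the truncation $\max\{0,\cdot\}$ needs no separate handling at all, since $\max\{0,d-\zeta\}\geq d-\zeta$ already gives $d\leq d'+\zeta$ unconditionally.
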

\begin{proof}
Let the cluster centers for $d\left(d'\right)$ be $c_{1}\left(c_{1}'\right)$ and $c_{2}\left(c_{2}'\right)$, and the clusters be $k_{1}\left(k_{1}'\right)$, and $k_{2}\left(k_{2}'\right)$. We assume that the attacker can query to find out which point of $k_1'$ is closest to $c_2$. We use the attack used to prove Theorem 1, i.e., the attacker perturbs a point $x$ from $k_1'$, to the center of $k_2'$. Consequently, $c_{1}'$ becomes $\bar{c_{1}}'$, $c_{2}'$ remains the same, and another point $y$ spills-over from $k_1'$ to $k_2'$. 

Say, after the attack, in the actual clustering using $d$, the centers are represented by $\bar{c_1}$ and $\bar{c_2}$. 
We will show that $y$ will spill-over in the actual clustering using $d$ as well, under certain conditions.
Now, we have before the attack:
\begin{equation} \label{y_before_attack}
    d'(y,c_{2}') > d'(y,c_{1}')  
\end{equation}
Using the triangle inequality twice on (\ref{y_before_attack}) we can write:
\begin{align*}
     d'(y,c_{2}') & > d'(y,\bar{c_{1}}') - d'(\bar{c_{1}}', c_{1}')\\
     d'(y,\bar{c_2}) + d'(\bar{c_2}, c_{2}') & > d'(y,\bar{c_{1}}) - d'(\bar{c_{1}}', \bar{c_1}) - d'(\bar{c_{1}}', c_{1}')
\end{align*}
Let $\gamma = d'(\bar{c_{1}}', c_{1}') + d'(\bar{c_{1}}', \bar{c_1}) + d'(\bar{c_2}, c_{2}') \geq 0 $, and using (\ref{eps_perturbation}) we can write:
\begin{align}\label{gamma_eq}
    d'(y,\bar{c_2}) & - d'(y,\bar{c_{1}})  > - \gamma \nonumber\\
    d(y,\bar{c_2}) & - d(y,\bar{c_{1}})  > - \gamma - 2\zeta
\end{align}

If $d(y,\bar{c_2}) - d(y,\bar{c_1}) < 0$ then point $y$ has spilled-over in the actual clustering too, and we can see that the lower bound in (\ref{gamma_eq}) is negative as both $\zeta$ and $\gamma$ are non-negative. Therefore, this ensures that $d(y,\bar{c_2}) - d(y,\bar{c_1})$ is negative for a range of values of $y$.
\end{proof}

\subsection{Toy Example}

In this subsection we present the working of our attack algorithm consistent with the assumptions in Theorem 1. We create 2-dimensional Gaussian clusters with standard deviations of $1.45$ and $0.75$, and cluster centroids at $(1,0)$ and $(5,0)$, respectively. Using Algorithm 1 we find the target point $x_t$ to perturb which is originally in $k_1$. The clusters $k_1$ and $k_2$ generated along with $x_t$ are shown in Fig. 1(a). The first cluster $k_1$, the second cluster $k_2$, and the target sample $x_t$ are shown in red, green, and blue respectively. 

It is important to note that the assumption taken in Theorem 1 regarding the target sample also holds true as $\langle x_t -c_1,c_2-c_1\rangle = 5.0511$, where $c_1$ and $c_2$ denote the cluster centroids of $k_1$ and $k_2$. Next, using the optimization procedure outlined in Algorithm 1, we perturb $x_t$ in such a way so as to lead to spill-over. Figure 1(b) shows that the perturbed $x_t$ has changed cluster membership, and there is one spill-over adversarial sample. Figure~\ref{fig:toy} also shows the equi-$\mathcal{D}_C$ contours with depth decreasing by 0.1, away from the cluster centers. The contours in Fig.~\ref{fig:toy} show that the adversarial sample is not an outlier. Figure 1(a)-(c) show the equi-$\mathcal{D}_C$ contours, and Fig. 1(d) shows the equi-$\mathcal{D}_M$ contours. Note that $\mathcal{D}_C$ correctly prescribes more \textit{outlyingness} for points that lie between two clusters, which do not belong to either cluster, as compared to points which belong to one of the clusters. The spill-over adversarial sample has been highlighted in orange in Fig. 1(c). We find that the spill-over adversarial sample $y$ satisfies the condition stated in Theorem 1, i.e., $\langle x_t-c_1,y-c_1\rangle = 1.2872\geq 0$.

\section{Results}
In this section, we present the results obtained when we use Algorithm 1 to generate spill-over adversarial samples for Ward's Hierarchical clustering and K-Means clustering on a number of different datasets. The cubic RBF surface response method used for the gradient-free optimization of our objective function in Algorithm 1 is based off the open-source implementation of \cite{knysh2016blackbox} that is available on GitHub \cite{blackbox} which we have slightly modified programmatically so that it can handle high dimensional data for optimization. We have also open-sourced the code used to generate all results using our proposed attack algorithm on GitHub \cite{code}. We use the K-means clustering and Ward's clustering implementation as available in the Scikit-learn package \cite{pedregosa2011scikit}.
\subsection{Ward's Hierarchical Clustering}
\subsubsection{UCI Handwritten Digits Dataset}
The UCI Digits dataset \cite{uci_digits} consists of $8\times 8$ images of handwritten digits from $0$ to $9$. In these images each pixel is an integer between 0, and 16. Each image can be represented by feature vectors of length $64$. We test Ward's clustering for this dataset since it clusters the digits well. We use these images as inputs to the clustering algorithm.
\begin{table*}[t]
\caption{UCI Handwritten Digits Dataset results (Ward's Clustering)}
\begin{center}
\begin{tabular}{ |c||c|c|c|c|c|c|c| }
\hline
Digit clusters & $k_1$ & $k_2$ & $n_1$ & $n_2$ & \# Misclustered samples & $\mathcal{D}_M(x_t')$ & $\mathcal{D}_M(X)$ quantile\\
\hline
\hline
 Digit \texttt{1} \& \texttt{4} & Digit \texttt{4} & Digit \texttt{1} & 181 & 182 & 24 & 0.061 & 0.10\\
 \hline
 \hline
 Digit \texttt{8} \& \texttt{9} & Digit \texttt{9} & Digit \texttt{8} & 164 & 190 & 21 & 0.114 & 0.285\\
 \hline
\end{tabular}
\end{center}
\end{table*}

We apply Ward's clustering on two clustering problems: For clustering Digits \texttt{1} and \texttt{4} images, and clustering Digits \texttt{8} and \texttt{9} images. For each case, the cluster information, parameter details, total misclustered samples, $\mathcal{D}_M\left(x_t'\right)$, and the quantile of $\mathcal{D}_M(x_t')$ with respect to $\mathcal{D}_M(X)$, are listed in Table 2. Algorithm 1 starts with the target sample from $k_1$, and generates spill-over adversarial samples that switch cluster membership from $k_1$ to $k_2$. For the Digits \texttt{1} and \texttt{4} clusters the spill-over adversarial images are shown in Fig. 2, and for Digits \texttt{8} and \texttt{9}, they are shown in Fig. 3. For both the clustering problems, the $\mathcal{D}_M(X)$ quantile above which $\mathcal{D}_M(x_t')$ lies, indicates that it cannot be an outlier.

\begin{figure}[t]
\centering
\begin{subfigure}{.075\textwidth}
  \centering
  \includegraphics[scale=0.10]{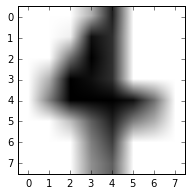}%
  \hfill
  \includegraphics[scale=0.10]{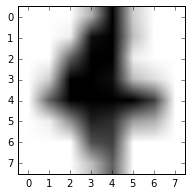}%
  \hfill
  \includegraphics[scale=0.10]{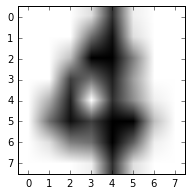}%
  \hfill
  \includegraphics[scale=0.10]{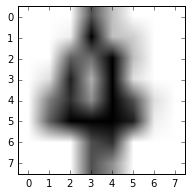}
\end{subfigure}%
\begin{subfigure}{.075\textwidth}
  \centering
  \includegraphics[scale=0.10]{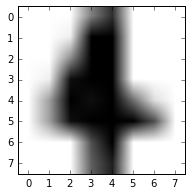}%
    \hfill
  \includegraphics[scale=0.10]{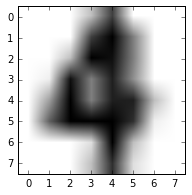}%
  \hfill
  \includegraphics[scale=0.10]{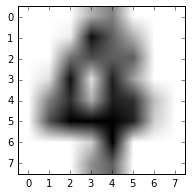}%
  \hfill
  \includegraphics[scale=0.10]{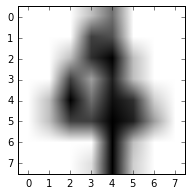}
\end{subfigure}%
\begin{subfigure}{.075\textwidth}
  \centering
  \includegraphics[scale=0.10]{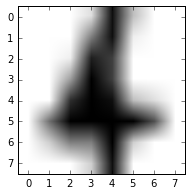}%
    \hfill
  \includegraphics[scale=0.10]{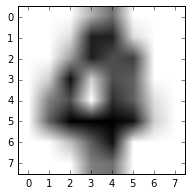}%
  \hfill
  \includegraphics[scale=0.10]{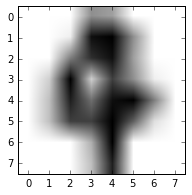}%
  \hfill
  \includegraphics[scale=0.10]{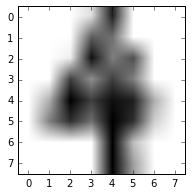}
\end{subfigure}%
\begin{subfigure}{.075\textwidth}
  \centering
  \includegraphics[scale=0.10]{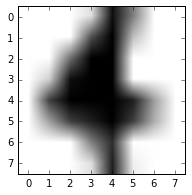}%
    \hfill
  \includegraphics[scale=0.10]{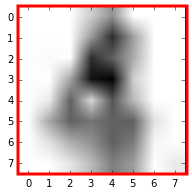}%
  \hfill
  \includegraphics[scale=0.10]{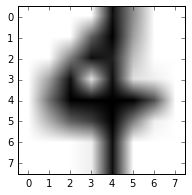}%
  \hfill
  \includegraphics[scale=0.10]{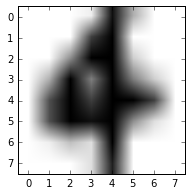}
\end{subfigure}%
\begin{subfigure}{.075\textwidth}
  \centering
  \includegraphics[scale=0.10]{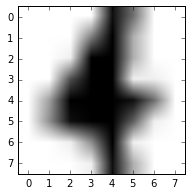}%
    \hfill
  \includegraphics[scale=0.10]{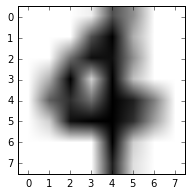}%
  \hfill
  \includegraphics[scale=0.10]{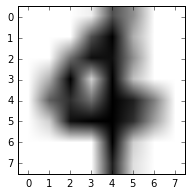}%
  \hfill
  \includegraphics[scale=0.10]{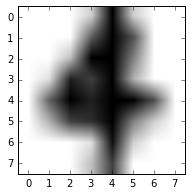}
\end{subfigure}%
\begin{subfigure}{.075\textwidth}
  \centering
  \includegraphics[scale=0.10]{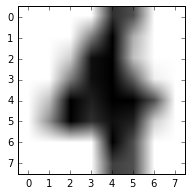}%
    \hfill
  \includegraphics[scale=0.10]{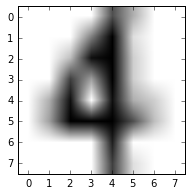}%
  \hfill
  \includegraphics[scale=0.10]{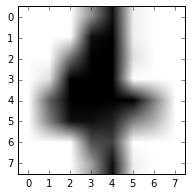}%
  \hfill
  \includegraphics[scale=0.10]{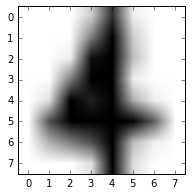}
\end{subfigure}
\caption{Misclustered images that switched clusters from the Digit \texttt{4} to the Digit \texttt{1} cluster (adversarially perturbed sample shown with red border)}
\end{figure}
\begin{figure}[t]
\centering
\begin{subfigure}{.065\textwidth}
  \centering
  \includegraphics[scale=0.10]{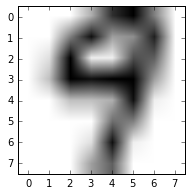}%
  \hfill
  \includegraphics[scale=0.10]{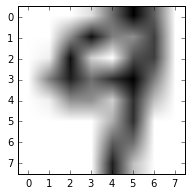}%
  \hfill
  \includegraphics[scale=0.10]{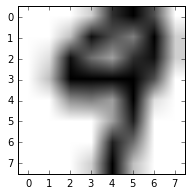}
\end{subfigure}%
\begin{subfigure}{.065\textwidth}
  \centering
  \includegraphics[scale=0.10]{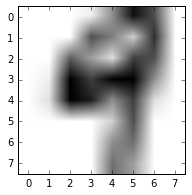}%
    \hfill
  \includegraphics[scale=0.10]{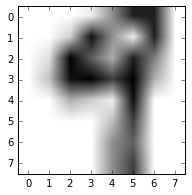}%
  \hfill
  \includegraphics[scale=0.10]{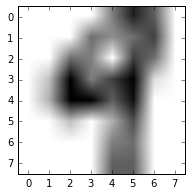}
\end{subfigure}%
\begin{subfigure}{.065\textwidth}
  \centering
  \includegraphics[scale=0.10]{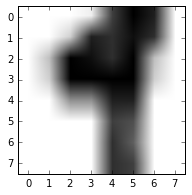}%
    \hfill
  \includegraphics[scale=0.10]{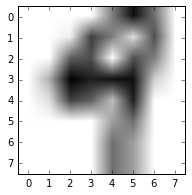}%
  \hfill
  \includegraphics[scale=0.10]{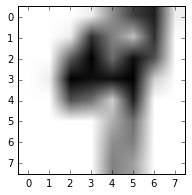}
\end{subfigure}%
\begin{subfigure}{.065\textwidth}
  \centering
  \includegraphics[scale=0.10]{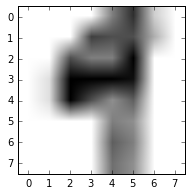}%
    \hfill
  \includegraphics[scale=0.10]{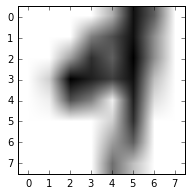}%
  \hfill
  \includegraphics[scale=0.10]{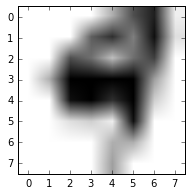}
\end{subfigure}%
\begin{subfigure}{.065\textwidth}
  \centering
  \includegraphics[scale=0.10]{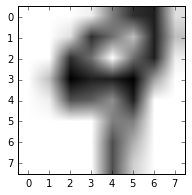}%
    \hfill
  \includegraphics[scale=0.10]{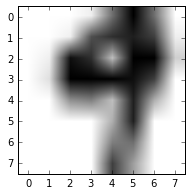}%
  \hfill
  \includegraphics[scale=0.10]{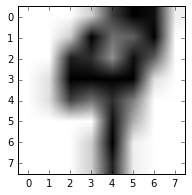}
\end{subfigure}%
\begin{subfigure}{.065\textwidth}
  \centering
  \includegraphics[scale=0.10]{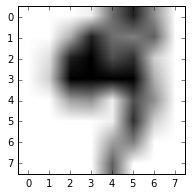}%
    \hfill
  \includegraphics[scale=0.10]{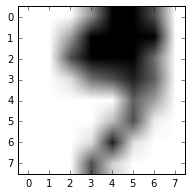}%
  \hfill
  \includegraphics[scale=0.10]{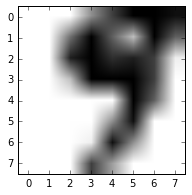}
\end{subfigure}%
\begin{subfigure}{.065\textwidth}
  \centering
  \includegraphics[scale=0.10]{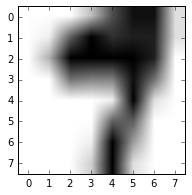}%
    \hfill
  \includegraphics[scale=0.10]{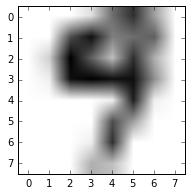}%
  \hfill
  \includegraphics[scale=0.10]{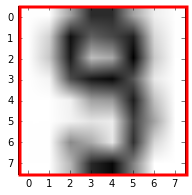}
\end{subfigure}
\caption{Misclustered images that switched clusters from the Digit \texttt{9} to the Digit \texttt{8} cluster (adversarially perturbed sample shown with red border)}
\end{figure}
\subsubsection{MNIST Dataset}
To show the performance of Algorithm 1 we use the  \texttt{MNIST} dataset \cite{lecun1998mnist} which is an image dataset. We utilize small subsets of the original digit images, and use 200 images for each digit. The digit images here are $28\times 28$ grayscale images of digits from $0$ to $9$. For inputs to the clustering, we flatten each image sample and get a feature vector of length $m=784$.
\begin{table*}[t]
\caption{\texttt{MNIST} Dataset results (Ward's Clustering)}
\begin{center}
\begin{tabular}{ |c||c|c|c|c|c|c|c| }
\hline
Digit clusters & $k_1$ & $k_2$ & $n_1$ & $n_2$ & \# Misclustered samples & $\mathcal{D}_M(x_t')$ & $\mathcal{D}_M(X)$ quantile\\
\hline
\hline
 Digit \texttt{1} \& \texttt{4} & Digit \texttt{4} & Digit \texttt{1} & 192 & 208 & 11 & 0.067 & 0.49\\
 \hline
 Digit \texttt{2} \& \texttt{3} & Digit \texttt{3} & Digit \texttt{2} & 176 & 224 & 2 & 0.13 & 0.828 \\ 
 \hline
\end{tabular}
\end{center}
\end{table*}
We apply Ward's clustering on two clustering problems: For clustering Digits \texttt{1} and \texttt{4} images, and clustering Digits \texttt{2} and \texttt{3} images. For each of these, the cluster information, parameter details, total misclustered samples, the perturbed target sample depth $\mathcal{D}_M(x_t')$, and the quantile of $\mathcal{D}_M(x_t')$ with respect to $\mathcal{D}_M(X)$, are listed in Table 3. The attack algorithm starts with the target sample from the $k_1$ cluster and then generates spill-over adversarial samples that switch cluster membership from $k_1$ to $k_2$. For the Digits \texttt{1} and \texttt{4} clusters the spill-over adversarial images are shown in Fig. 4, and for Digits \texttt{2} and \texttt{3} clusters they are shown in Fig. 5. Here too, the $\mathcal{D}_M(X)$ quantile range that $\mathcal{D}_M(x_t')$ lies in ensures that the perturbed adversarial sample cannot be detected as an outlier.
\begin{figure}[t]
\centering
\begin{subfigure}{.041\textwidth}
  \centering
  \includegraphics[scale=0.10]{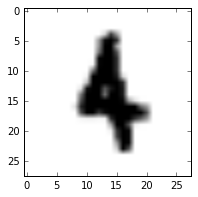}
\end{subfigure}%
\begin{subfigure}{.041\textwidth}
  \centering
  \includegraphics[scale=0.10]{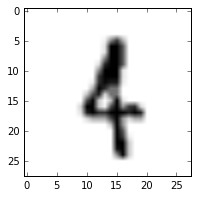}
\end{subfigure}%
\begin{subfigure}{.041\textwidth}
  \centering
  \includegraphics[scale=0.10]{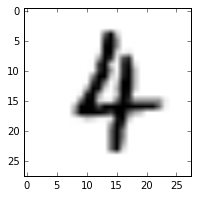}
\end{subfigure}%
\begin{subfigure}{.041\textwidth}
  \centering
  \includegraphics[scale=0.10]{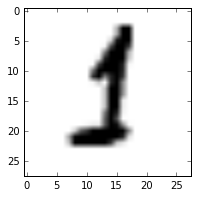}
\end{subfigure}%
\begin{subfigure}{.041\textwidth}
  \centering
  \includegraphics[scale=0.10]{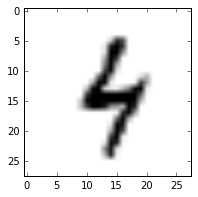}
\end{subfigure}%
\begin{subfigure}{.041\textwidth}
  \centering
  \includegraphics[scale=0.10]{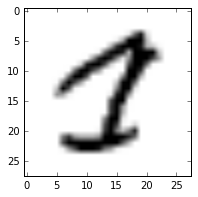}
\end{subfigure}%
\begin{subfigure}{.041\textwidth}
  \centering
  \includegraphics[scale=0.10]{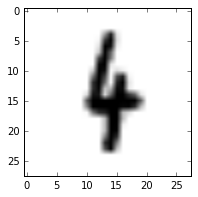}
\end{subfigure}%
\begin{subfigure}{.041\textwidth}
  \centering
  \includegraphics[scale=0.10]{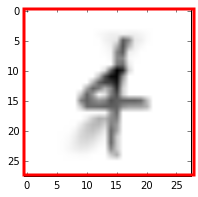}
\end{subfigure}%
\begin{subfigure}{.041\textwidth}
  \centering
  \includegraphics[scale=0.10]{mnist-1-4-2.png}
\end{subfigure}%
\begin{subfigure}{.041\textwidth}
  \centering
  \includegraphics[scale=0.10]{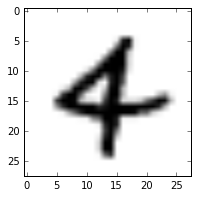}
\end{subfigure}%
\begin{subfigure}{.041\textwidth}
  \centering
  \includegraphics[scale=0.10]{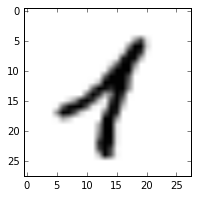}
\end{subfigure}
\caption{Misclustered \texttt{MNIST} images that switched clusters from the Digit \texttt{4} to the Digit \texttt{1} cluster (adversarially perturbed sample shown with red border)}
\end{figure}
\begin{figure}[t]
\centering
\begin{subfigure}{.1\textwidth}
  \centering
  \includegraphics[scale=0.10]{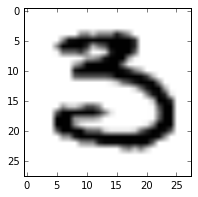}%
\end{subfigure}%
\begin{subfigure}{.1\textwidth}
  \centering
  \includegraphics[scale=0.10]{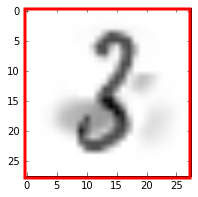}%
\end{subfigure}
\caption{Misclustered \texttt{MNIST} images that switched clusters from the Digit \texttt{3} to the Digit \texttt{2} cluster (adversarially perturbed sample shown with red border)}
\end{figure}
\subsection{K-Means Clustering}
\subsubsection{UCI Wheat Seeds Dataset}
The UCI Wheat Seeds dataset \cite{charytanowicz2010complete} contains measurements of geometric properties of three different varieties of wheat kernels: Kama, Rosa, and Canadian, with 70 samples for each seed variety. Each sample has the following 7 features: Area of the kernel $A$, perimeter of the kernel $P$, compactness $C = 4 \pi A/P^2$, kernel length, kernel width, asymmetry coefficient, and length of kernel groove. 
We use the K-Means clustering for clustering Kama and Rosa wheat kernels. Cluster sizes for Rosa is $n_1 = 79$, and for Kama is $n_2 = 61$. The noise threshold $\Delta$ is selected using the outlier depth methodology described in the previous sections. Algorithm 1 starts with the target sample in the Rosa cluster and generates 2 adversarial spill-over adversarial samples which have switched cluster labels from the Rosa cluster (or cluster $k_1$) to the Kama (or cluster $k_2$) cluster including the target sample. The original clustering with the target sample selected is shown in Fig. 6, and is plotted in 3D using the area, perimeter, and compactness features. Here $\mathcal{D}_M(x_t') = 0.33$ which is the 0.28 quantile of $\mathcal{D}_M(X)$ which ensures that it will not be an outlier.
\begin{figure}[t]
\centering
\includegraphics[scale=0.38]{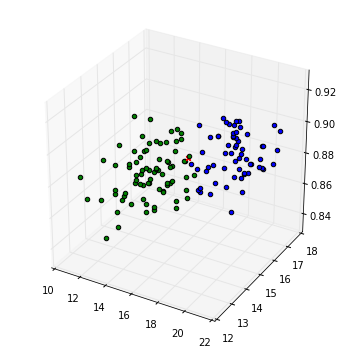}
\caption{Kama and Rosa wheat kernel clusters (target sample to be adversarially perturbed in red) visualized using the \textit{area}, \textit{perimeter}, and \textit{compactness} wheat seed features}
\end{figure}
\subsubsection{MoCap Hand Postures Dataset}
 The MoCap Hand Postures dataset \cite{gardner2014measuring} consists of 5 types of hand postures/gestures from 12 users recorded in a motion capture environment using 11 unlabeled markers attached to a glove. We only use a small subset of the data with 200 samples for each cluster. For clustering, the possible features are each of the 11 markers' $X, Y, Z$ coordinates. However, we only use the first 3 markers' recorded $X, Y, Z$ coordinates because due to resolution and occlusion, missing values are common in the other markers' data. Thus, we have a total of 9 features: $X_i, Y_i, Z_i$ for each $i^{th}$ marker, where $i=1,..,3$.
We use the K-Means clustering for clustering the Point1 posture and the Grab posture. Cluster size for Grab posture is $n_1 = 209$, and for Point1 posture is $n_2 = 191$. Algorithm 1 starts with the target sample in the Grab posture cluster, and generates 5 adversarial spill-over adversarial samples which have switched cluster labels from the Grab cluster ($k_1$) to the Point1 ($k_2$) cluster including the target sample. The original clustering with the target sample selected is shown in Fig. 7, and is plotted in 3D using the $Z_1, Z_2, Z_3$ marker coordinates features. Here $\mathcal{D}_M(x_t') = 0.325$ is the 0.27 quantile of $\mathcal{D}_M(X)$. This indicates that $x_t'$ is not an outlier.
\begin{figure}[t]
\centering
\includegraphics[scale=0.38]{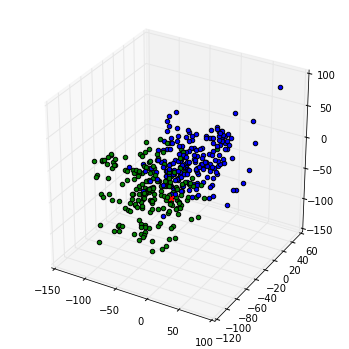}
\caption{Point1 and Grab pose clusters (target sample to be adversarially perturbed in red) visualized using the $Z_1, Z_2, Z_3$ marker position features}
\end{figure}
\section{Conclusion and Future Work}
In this paper, we propose a black-box adversarial attack algorithm which shows that clustering algorithms are vulnerable to adversarial attack even against a fairly constrained adversary. Our contributions are as follows:
\begin{itemize}
    \item Our attack algorithm creates \textit{spill-over} adversarial samples by perturbing one sample, and consequently perturbing the decision boundary between clusters. To the best of our knowledge, this is the first work in the field which generates additional adversarial samples without adding noise to those samples. (Section 3)
    \item We provide theoretical justification for existence of spill-over adversarial samples for K-Means clustering. We believe this is the first theoretical result showing the existence of spill-over adversarial samples (Section 4).
    \item We theoretically show that misclustering can happen using the spill-over adversarial attack even when the attacker does not have access to the true metric used for the clustering, but uses a noisy metric to cluster the data. This makes our attack especially powerful (Section 4).
    \item Our attack algorithm allows for the adversary to choose the noise threhsold $\Delta$ such that the perturbed adversarial sample does not become an outlier. We accomplish this by proposing the notion of Mahalanobis Depth for Clusters
    ($\mathcal{D}_C$), and Coordinatewise-Min-Mahalanobis Depth for high dimensional clustered data ($\mathcal{D}_M$). 
    \item We test the attack algorithm on Ward's Hierarchical clustering, and the K-Means clustering on a number of datasets, e.g., the UCI Handwritten Digits dataset, the MNIST dataset, the MoCap Hand Postures dataset, and the UCI Wheat Seeds dataset. We successfully carry out adversarial attacks on the clustering algorithms for all datasets even though true metric is unknown (Section 5). 
\end{itemize}
In future, we will provide theoretical results for Ward's Hierarchical clustering, since empirically the attack algorithm is successful in generating adversarial samples against it. Improving the optimization approach, and substituting the black-box heuristic optimization approaches with a more robust procedure that has provable convergence guarantees is also a promising direction. Much like supervised learning algorithms, we find that clustering algorithms are also vulnerable to powerful black-box adversarial attacks, making it imperative to design robust clustering approaches.
\bibliographystyle{aaai.bst}
\bibliography{aaai}

\begin{thebibliography}{}

\bibitem[\protect\citeauthoryear{Alpaydin and Kaynak}{1995}]{uci_digits}
Alpaydin, E., and Kaynak, C.
\newblock 1995.
\newblock {Optical Recognition of Handwritten Digits}.
\newblock \textit{http://archive.ics.uci.edu/ml/datasets/
  Optical+Recognition+of+Handwritten+Digits}.
\newblock [Online; accessed 1-November-2018].

\bibitem[\protect\citeauthoryear{Biggio \bgroup et al\mbox.\egroup
  }{2013}]{biggio2013data}
Biggio, B.; Pillai, I.; Rota~Bul{\`o}, S.; Ariu, D.; Pelillo, M.; and Roli, F.
\newblock 2013.
\newblock Is data clustering in adversarial settings secure?
\newblock In {\em Proceedings of the 2013 ACM workshop on Artificial
  intelligence and security},  87--98.
\newblock ACM.

\bibitem[\protect\citeauthoryear{Biggio \bgroup et al\mbox.\egroup
  }{2014}]{biggio2014poisoning}
Biggio, B.; Bul{\`o}, S.~R.; Pillai, I.; Mura, M.; Mequanint, E.~Z.; Pelillo,
  M.; and Roli, F.
\newblock 2014.
\newblock Poisoning complete-linkage hierarchical clustering.
\newblock In {\em Joint IAPR International Workshops on Statistical Techniques
  in Pattern Recognition (SPR) and Structural and Syntactic Pattern Recognition
  (SSPR)},  42--52.
\newblock Springer.

\bibitem[\protect\citeauthoryear{Carlini and
  Wagner}{2017}]{Carlini:2017:AEE:3128572.3140444}
Carlini, N., and Wagner, D.
\newblock 2017.
\newblock Adversarial examples are not easily detected: Bypassing ten detection
  methods.
\newblock In {\em Proceedings of the 10th ACM Workshop on Artificial
  Intelligence and Security}, AISec '17,  3--14.
\newblock New York, NY, USA: ACM.

\bibitem[\protect\citeauthoryear{Charytanowicz \bgroup et al\mbox.\egroup
  }{2010}]{charytanowicz2010complete}
Charytanowicz, M.; Niewczas, J.; Kulczycki, P.; Kowalski, P.~A.; {\L}ukasik,
  S.; and {\.Z}ak, S.
\newblock 2010.
\newblock Complete gradient clustering algorithm for features analysis of x-ray
  images.
\newblock In {\em Information technologies in biomedicine}. Springer.
\newblock  15--24.

\bibitem[\protect\citeauthoryear{Chhabra}{2019}]{code}
Chhabra, A.
\newblock 2019.
\newblock {}.
\newblock \textit{https://github.com/anshuman23/aaai-adversarial-clustering}.
\newblock [Online; accessed 1-November-2019].

\bibitem[\protect\citeauthoryear{Crussell and
  Kegelmeyer}{2015}]{crussell2015attacking}
Crussell, J., and Kegelmeyer, P.
\newblock 2015.
\newblock Attacking dbscan for fun and profit.
\newblock In {\em Proceedings of the 2015 SIAM International Conference on Data
  Mining},  235--243.
\newblock SIAM.

\bibitem[\protect\citeauthoryear{Crussell, Gibler, and
  Chen}{2013}]{crussell2013andarwin}
Crussell, J.; Gibler, C.; and Chen, H.
\newblock 2013.
\newblock Andarwin: Scalable detection of semantically similar android
  applications.
\newblock In {\em European Symposium on Research in Computer Security},
  182--199.
\newblock Springer.

\bibitem[\protect\citeauthoryear{Dutrisac and
  Skillicorn}{2008}]{dutrisac2008hiding}
Dutrisac, J., and Skillicorn, D.~B.
\newblock 2008.
\newblock Hiding clusters in adversarial settings.
\newblock In {\em Intelligence and Security Informatics, 2008. ISI 2008. IEEE
  International Conference on},  185--187.
\newblock IEEE.

\bibitem[\protect\citeauthoryear{Gardner \bgroup et al\mbox.\egroup
  }{2014}]{gardner2014measuring}
Gardner, A.; Kanno, J.; Duncan, C.~A.; and Selmic, R.
\newblock 2014.
\newblock Measuring distance between unordered sets of different sizes.
\newblock In {\em Proceedings of the IEEE Conference on Computer Vision and
  Pattern Recognition},  137--143.

\bibitem[\protect\citeauthoryear{Goldberg}{2006}]{goldberg2006genetic}
Goldberg, D.~E.
\newblock 2006.
\newblock {\em Genetic algorithms}.
\newblock Pearson Education India.

\bibitem[\protect\citeauthoryear{Holmstr{\"o}m, Quttineh, and
  Edvall}{2008}]{holmstrom2008adaptive}
Holmstr{\"o}m, K.; Quttineh, N.-H.; and Edvall, M.~M.
\newblock 2008.
\newblock An adaptive radial basis algorithm (arbf) for expensive black-box
  mixed-integer constrained global optimization.
\newblock {\em Optimization and Engineering} 9(4):311--339.

\bibitem[\protect\citeauthoryear{Kirkpatrick, Gelatt, and
  Vecchi}{1983}]{kirkpatrick1983optimization}
Kirkpatrick, S.; Gelatt, C.~D.; and Vecchi, M.~P.
\newblock 1983.
\newblock Optimization by simulated annealing.
\newblock {\em science} 220(4598):671--680.

\bibitem[\protect\citeauthoryear{Knysh and Korkolis}{2016}]{knysh2016blackbox}
Knysh, P., and Korkolis, Y.
\newblock 2016.
\newblock Blackbox: A procedure for parallel optimization of expensive
  black-box functions.
\newblock {\em arXiv preprint arXiv:1605.00998}.

\bibitem[\protect\citeauthoryear{Knysh}{2016}]{blackbox}
Knysh, P.
\newblock 2016.
\newblock \texttt{Blackbox}: A python module for parallel optimization of
  expensive black-box functions.
\newblock \textit{https://github.com/paulknysh/blackbox}.
\newblock [Online; accessed 1-August-2019].

\bibitem[\protect\citeauthoryear{LeCun}{1998}]{lecun1998mnist}
LeCun, Y.
\newblock 1998.
\newblock The mnist database of handwritten digits.
\newblock \textit{http://yann. lecun. com/exdb/mnist/}.
\newblock [Online; accessed 1-November-2018].

\bibitem[\protect\citeauthoryear{Lloyd}{1982}]{Lloyd:2006:LSQ:2263356.2269955}
Lloyd, S.
\newblock 1982.
\newblock Least squares quantization in pcm.
\newblock {\em IEEE Trans. Inf. Theor.} 28(2):129--137.

\bibitem[\protect\citeauthoryear{McKay, Beckman, and
  Conover}{2000}]{mckay2000comparison}
McKay, M.~D.; Beckman, R.~J.; and Conover, W.~J.
\newblock 2000.
\newblock A comparison of three methods for selecting values of input variables
  in the analysis of output from a computer code.
\newblock {\em Technometrics} 42(1):55--61.

\bibitem[\protect\citeauthoryear{Paindaveine and
  Van~Bever}{2013}]{paindaveine2013depth}
Paindaveine, D., and Van~Bever, G.
\newblock 2013.
\newblock From depth to local depth: a focus on centrality.
\newblock {\em Journal of the American Statistical Association}
  108(503):1105--1119.

\bibitem[\protect\citeauthoryear{Papernot, McDaniel, and
  Goodfellow}{2016}]{papernot2016transferability}
Papernot, N.; McDaniel, P.; and Goodfellow, I.
\newblock 2016.
\newblock Transferability in machine learning: from phenomena to black-box
  attacks using adversarial samples.
\newblock {\em arXiv preprint arXiv:1605.07277}.

\bibitem[\protect\citeauthoryear{Pedregosa \bgroup et al\mbox.\egroup
  }{2011}]{pedregosa2011scikit}
Pedregosa, F.; Varoquaux, G.; Gramfort, A.; Michel, V.; Thirion, B.; Grisel,
  O.; Blondel, M.; Prettenhofer, P.; Weiss, R.; Dubourg, V.; et~al.
\newblock 2011.
\newblock Scikit-learn: Machine learning in python.
\newblock {\em Journal of machine learning research} 12(Oct):2825--2830.

\bibitem[\protect\citeauthoryear{Perdisci, Ariu, and Giacinto}{2013}]{malware2}
Perdisci, R.; Ariu, D.; and Giacinto, G.
\newblock 2013.
\newblock Scalable fine-grained behavioral clustering of http-based malware.
\newblock {\em Computer Networks} 57(2):487 -- 500.
\newblock Botnet Activity: Analysis, Detection and Shutdown.

\bibitem[\protect\citeauthoryear{{P}ouget \bgroup et al\mbox.\egroup
  }{2006}]{malware1}
{P}ouget, F.; {D}acier, M.; {Z}immerman, J.; {C}lark, A.; and {M}ohay, G.
\newblock 2006.
\newblock {I}nternet attack knowledge discovery via clusters and cliques of
  attack traces.
\newblock {\em {J}ournal of {I}nformation {A}ssurance and {S}ecurity, {V}olume
  1, {I}ssue 1, {M}arch 2006}.

\bibitem[\protect\citeauthoryear{Regis and Shoemaker}{2005}]{Regis2005}
Regis, R.~G., and Shoemaker, C.~A.
\newblock 2005.
\newblock Constrained global optimization of expensive black box functions
  using radial basis functions.
\newblock {\em Journal of Global Optimization} 31(1):153--171.

\bibitem[\protect\citeauthoryear{Sander \bgroup et al\mbox.\egroup
  }{1998}]{sander1998density}
Sander, J.; Ester, M.; Kriegel, H.-P.; and Xu, X.
\newblock 1998.
\newblock Density-based clustering in spatial databases: The algorithm gdbscan
  and its applications.
\newblock {\em Data mining and knowledge discovery} 2(2):169--194.

\bibitem[\protect\citeauthoryear{Skillicorn}{2009}]{skillicorn2009adversarial}
Skillicorn, D.~B.
\newblock 2009.
\newblock Adversarial knowledge discovery.
\newblock {\em IEEE Intelligent Systems} 24:54--61.

\bibitem[\protect\citeauthoryear{Szegedy \bgroup et al\mbox.\egroup
  }{2013}]{szegedy2013intriguing}
Szegedy, C.; Zaremba, W.; Sutskever, I.; Bruna, J.; Erhan, D.; Goodfellow, I.;
  and Fergus, R.
\newblock 2013.
\newblock Intriguing properties of neural networks.
\newblock {\em arXiv preprint arXiv:1312.6199}.

\bibitem[\protect\citeauthoryear{Ward~Jr}{1963}]{ward1963hierarchical}
Ward~Jr, J.~H.
\newblock 1963.
\newblock Hierarchical grouping to optimize an objective function.
\newblock {\em Journal of the American statistical association}
  58(301):236--244.

\bibitem[\protect\citeauthoryear{Xing \bgroup et al\mbox.\egroup
  }{2003}]{xing2003distance}
Xing, E.~P.; Jordan, M.~I.; Russell, S.~J.; and Ng, A.~Y.
\newblock 2003.
\newblock Distance metric learning with application to clustering with
  side-information.
\newblock In {\em Advances in neural information processing systems},
  521--528.

\end{thebibliography}
\end{document}